\providecommand{\tabularnewline}{\\}
\theoremstyle{plain}
\newtheorem{thm}{\protect\theoremname}
\newenvironment{proof}[1][\protect\proofname]{\par
	\normalfont\topsep6\p@\@plus6\p@\relax
	\trivlist
	\itemindent\parindent
	\item[\hskip\labelsep\scshape #1]\ignorespaces
}{%
	\endtrivlist\@endpefalse
}
\providecommand{\proofname}{Proof}
\title{Regularization Learning Networks: Deep Learning for Tabular Datasets}
\author{
 Ira Shavitt\\
 Weizmann Institute of Science\\
 \texttt{irashavitt@gmail.com} \\
  \And
  Eran Segal \\
  Weizmann Institute of Science \\
  \texttt{eran.segal@weizmann.ac.il} \\
}
\providecommand{\theoremname}{Theorem}
\begin{document}

\maketitle
\begin{abstract}
Despite their impressive performance, \textit{Deep Neural Networks
}(DNNs) typically underperform \textit{Gradient Boosting Trees }(GBTs)
on many tabular-dataset learning tasks. We propose that applying a
different regularization coefficient to each weight might boost the
performance of DNNs by allowing them to make more use of the more
relevant inputs. However, this will lead to an intractable number
of hyperparameters. Here, we introduce \textit{Regularization Learning
Networks} (RLNs), which overcome this challenge by introducing an
efficient hyperparameter tuning scheme which minimizes a new \textit{Counterfactual
Loss}. Our results show that RLNs significantly improve DNNs on tabular
datasets, and achieve comparable results to GBTs, with the best performance
achieved with an ensemble that combines GBTs and RLNs. RLNs produce
extremely sparse networks, eliminating up to $99.8\%$ of the network
edges and $82\%$ of the input features, thus providing more interpretable
models and reveal the importance that the network assigns to different
inputs. RLNs could efficiently learn a single network in datasets
that comprise both tabular and unstructured data, such as in the setting
of medical imaging accompanied by electronic health records. An open
source implementation of RLN can be found at \url{https://github.com/irashavitt/regularization_learning_networks}.
\end{abstract}

\section{Introduction}

Despite their impressive achievements on various prediction tasks
on datasets with distributed representation \cite{hintondistributed,Bengio,Bengio2007}
such as images \cite{Krizhevsky}, speech \cite{SOTA_speech}, and
text \cite{SOTA_machine_translation}, there are many tasks in which
\textit{Deep Neural Networks} (DNNs) underperform compared to other
models such as \textit{Gradient Boosting Trees} (GBTs). This is evident
in various \textit{Kaggle} \cite{Beam2015,Belkhayat2018}, or \textit{KDD}
\textit{Cup} \cite{Cai,Huang,Sandulescu} competitions, which are
typically won by GBT-based approaches and specifically by its \textit{XGBoost}
\cite{Chen} implementation, either when run alone or within a combination
of several different types of models.

The datasets in which neural networks are inferior to GBTs typically
have different statistical properties. Consider the task of image
recognition as compared to the task of predicting the life expectancy
of patients based on electronic health records. One key difference
is that in image classification, many pixels need to change in order
for the image to depict a different object \cite{Papernot}.\footnote{This is not contradictory to the existence of adversarial examples
\cite{Goodfellowb}, which are able to fool DNNs by changing a small
number of input features, but do not actually depict a different object,
and generally are not able to fool humans.} In contrast, the relative contribution of the input features in the
electronic health records example can vary greatly: Changing a single
input such as the age of the patient can profoundly impact the life
expectancy of the patient, while changes in other input features,
such as the time that passed since the last test was taken, may have
smaller effects.

We hypothesized that this potentially large variability in the relative
importance of different input features may partly explain the lower
performance of DNNs on such tabular datasets \cite{Goodfellow-et-al-2016}.
One way to overcome this limitation could be to assign a different
regularization coefficient to every weight, which might allow the
network to accommodate the non-distributed representation and the
variability in relative importance found in tabular datasets. 

This will require tuning a large number of hyperparameters. The default
approach to hyperparameter tuning is using derivative-free optimization
of the validation loss, i.e., a loss of a subset of the training set
which is not used to fit the model. This approach becomes computationally
intractable very quickly.

Here, we present a new hyperparameter tuning technique, in which we
optimize the regularization coefficients using a newly introduced
loss function, which we term the \textit{Counterfactual Loss}, or$\mathcal{L}_{CF}$.
We term the networks that apply this technique \textit{Regularization
Learning Networks }(RLNs). In RLNs, the regularization coefficients
are optimized together with learning the network weight parameters.
We show that RLNs significantly and substantially outperform DNNs
with other regularization schemes, and achieve comparable results
to GBTs. When used in an ensemble with GBTs, RLNs achieves state of
the art results on several prediction tasks on a tabular dataset with
varying relative importance for different features.

\section{Related work}

Applying different regularization coefficients to different parts
of the network is a common practice. The idea of applying different
regularization coefficients to every weight was introduced \cite{Maclaurin},
but it was only applied to images with a toy model to demonstrate
the ability to optimize many hyperparameters.

Our work is also related to the rich literature of works on hyperparameter
optimization \cite{hyperparameter_optimization_guide}. These works
mainly focus on derivative-free optimization \cite{bayesian_hyperparameter_optimization,greedy_and_random_hyperparameter_tuning,ML_to_perform_hyperparameter_tuning}.
Derivative-based hyperparameter optimization is introduced in \cite{hyperparameter_tuning_using_gradient}
for linear models and in \cite{Maclaurin} for neural networks. In
these works, the hyperparameters are optimized using the gradients
of the validation loss. Practically, this means that every optimization
step of the hyperparameters requires training the whole network and
back propagating the loss to the hyperparameters. \cite{Lorraine2018}
showed a more efficient derivative based way for hyperparameter optimization,
which still required a substantial amount of additional parameters.
\cite{basically_the_same} introduce an optimization technique similar
to the one introduced in this paper, however, the optimization technique
in \cite{basically_the_same} requires a validation set, and only
optimizes a single regularization coefficient for each layer, and
at most 10-20 hyperparameters in any network. In comparison, training
RLNs doesn't require a validation set, assigns a different regularization
coefficient for every weight, which results in up to millions of hyperparameters,
optimized efficiently. Additionally, RLNs optimize the coefficients
in the log space and adds a projection after every update to counter
the vanishing of the coefficients. Most importantly, the efficient
optimization of the hyperparameters was applied to images and not
to dataset with non-distributed representation like tabular datasets.

DNNs have been successfully applied to tabular datasets like electronic
health records, in \cite{Rajkomar2018,Miotto2016}. The use of RLN
is complementary to these works, and might improve their results and
allow the use of deeper networks on smaller datasets.

To the best of our knowledge, our work is the first to illustrate
the statistical difference in distributed and non-distributed representations,
to hypothesize that addition of hyperparameters could enable neural
networks to achieve good results on datasets with non-distributed
representations such as tabular datasets, and to efficiently train
such networks on a real-world problems to significantly and substantially
outperform networks with other regularization schemes.

\section{Regularization Learning\label{sec:Regularization-Learning}}

Generally, when using regularization, we minimize $\tilde{\mathcal{L}}\left(Z,W,\lambda\right)=\mathcal{L}\left(Z,W\right)+\exp\left(\lambda\right)\cdot\sum_{i=1}^{n}\left\Vert w_{i}\right\Vert $,
where $Z=\left\{ \left(x_{m},y_{m}\right)\right\} _{m=1}^{M}$ are
the training samples, $\mathcal{L}$ is the loss function, $W=\left\{ w_{i}\right\} _{i=1}^{n}$
are the weights of the model, $\left\Vert \cdot\right\Vert $ is some
norm, and $\lambda$ is the regularization coefficient,\footnote{The notation for the regularization term is typically $\lambda\cdot\sum_{i=1}^{n}\left\Vert w_{i}\right\Vert $.
We use the notation $\exp\left(\lambda\right)\cdot\sum_{i=1}^{n}\left\Vert w_{i}\right\Vert $
to force the coefficients to be positive, to accelerate their optimization
and to simplify the calculations shown.} a hyperparameter of the network. Hyperparameters of the network,
like $\lambda$, are usually obtained using cross-validation, which
is the application of derivative-free optimization on $\mathcal{L}_{CV}\left(Z_{t},Z_{v},\lambda\right)$
with respect to $\lambda$ where $\mathcal{L}_{CV}\left(Z_{t},Z_{v},\lambda\right)=\mathcal{L}\left(Z_{v},\arg\min_{W}\tilde{\mathcal{L}}\left(Z_{t},W,\lambda\right)\right)$
and $\left(Z_{t},Z_{v}\right)$ is some partition of $Z$ into train
and validation sets, respectively.

If a different regularization coefficient is assigned to each weight
in the network, our learning loss becomes $\mathcal{L}^{\dagger}\left(Z,W,\Lambda\right)=\mathcal{L}\left(Z,W\right)+\sum_{i=1}^{n}\exp\left(\lambda_{i}\right)\cdot\left\Vert w_{i}\right\Vert $,
where $\Lambda=\left\{ \lambda_{i}\right\} _{i=1}^{n}$ are the regularization
coefficients. Using $\mathcal{L}^{\dagger}$ will require $n$ hyperparameters,
one for every network parameter, which makes tuning with cross-validation
intractable, even for very small networks. We would like to keep using
$\mathcal{L}^{\dagger}$ to update the weights, but to find a more
efficient way to tune $\Lambda$. One way to do so is through SGD,
but it is unclear which loss to minimize: $\mathcal{L}$ doesn't have
a derivative with respect to $\Lambda$, while $\mathcal{L}^{\dagger}$
has trivial optimal values, $\arg\min_{\Lambda}\mathcal{L}^{\dagger}\left(Z,W,\Lambda\right)=\left\{ -\infty\right\} _{i=1}^{n}$.
$\mathcal{L}_{CV}$ has a non-trivial dependency on $\Lambda$, but
it is very hard to evaluate $\frac{\partial\mathcal{L}_{CV}}{\partial\Lambda}$.

We introduce a new loss function, called the \textit{Counterfactual
Loss} $\mathcal{L}_{CF}$, which has a non-trivial dependency on $\Lambda$
and can be evaluated efficiently. For every time-step $t$ during
the training, let $W_{t}$ and $\Lambda_{t}$ be the weights and regularization
coefficients of the network, respectively, and let $w_{t,i}\in W_{t}$
and $\lambda_{t,i}\in\Lambda_{t}$ be the weight and the regularization
coefficient of the same edge $i$ in the network. When optimizing
using SGD, the value of this weight in the next time-step will be
$w_{t+1,i}=w_{t,i}-\eta\cdot\frac{\partial\mathcal{L}^{\dagger}\left(Z_{t},W_{t},\Lambda_{t}\right)}{\partial w_{t,i}}$,
where $\eta$ is the learning rate, and $Z_{t}$ is the training batch
at time $t$.\footnote{We assume vanilla SGD is used in this analysis for brevity, but the
analysis holds for any derivative-based optimization method.} We can split the gradient into two parts:
\begin{align}
w_{t+1,i} & =w_{t,i}-\eta\cdot\left(g_{t,i}+r_{t,i}\right)\label{eq:w_t+1}\\
g_{t,i} & =\frac{\partial\mathcal{L}\left(Z_{t},W_{t}\right)}{\partial w_{t,i}}\\
r_{t,i} & =\frac{\partial}{\partial w_{t,i}}\left(\sum_{j=1}^{n}\exp\left(\lambda_{t,j}\right)\cdot\left\Vert w_{t,j}\right\Vert \right)=\exp\left(\lambda_{t,i}\right)\cdot\frac{\partial\left\Vert w_{t,i}\right\Vert }{\partial w_{t,i}}\label{eq:r}
\end{align}
We call $g_{t,i}$ the gradient of the empirical loss $\mathcal{L}$
and $r_{t,i}$ the gradient of the regularization term. All but one
of the addends of $r_{t,i}$ vanished since $\frac{\partial}{\partial w_{t,i}}\left(\exp\left(\lambda_{t,j}\right)\cdot\left\Vert w_{t,j}\right\Vert \right)=0$
for every $j\ne i$. Denote by $W_{t+1}=\left\{ w_{t+1,i}\right\} _{i=1}^{n}$
the weights in the next time-step, which depend on $Z_{t}$, $W_{t}$,
$\Lambda_{t}$, and $\eta$, as shown in Equation \ref{eq:w_t+1},
and define the Counterfactual Loss to be 
\begin{align}
\mathcal{L}_{CF}\left(Z_{t},Z_{t+1},W_{t},\Lambda_{t},\eta\right) & =\mathcal{L}\left(Z_{t+1},W_{t+1}\right)
\end{align}
$\mathcal{L}_{CF}$ is the empirical loss $\mathcal{L}$, where the
weights have already been updated using SGD over the regularized loss
$\mathcal{L}^{\dagger}$. We call this the Counterfactual Loss since
we are asking a counterfactual question: \textit{What would have been
the loss of the network had we updated the weights with respect to
$\mathcal{L}^{\dagger}$}? We will use $\mathcal{L}_{CF}$ to optimize
the regularization coefficients using SGD \textit{while learning the
weights of the network simultaneously} using $\mathcal{L}^{\dagger}$.
We call this technique Regularization Learning, and networks that
employ it \textit{Regularization Learning Networks} (RLNs). 
\begin{thm}
\label{thm:The-gradient-of}The gradient of the Counterfactual loss
with respect to the regularization coefficient is $\frac{\partial\mathcal{L}_{CF}}{\partial\lambda_{t,i}}=-\eta\cdot g_{t+1,i}\cdot r_{t,i}$
\end{thm}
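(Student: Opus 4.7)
The plan is a direct chain-rule calculation: treat $\mathcal{L}_{CF}$ as the composition $\lambda_{t,i} \mapsto W_{t+1} \mapsto \mathcal{L}(Z_{t+1}, W_{t+1})$, with $W_t$, $Z_t$, $Z_{t+1}$, and $\eta$ all held fixed. Writing out $\frac{\partial \mathcal{L}_{CF}}{\partial \lambda_{t,i}} = \sum_{j=1}^{n} \frac{\partial \mathcal{L}(Z_{t+1}, W_{t+1})}{\partial w_{t+1,j}} \cdot \frac{\partial w_{t+1,j}}{\partial \lambda_{t,i}}$, the first factor is, by definition, $g_{t+1,j}$, so the task reduces to evaluating $\frac{\partial w_{t+1,j}}{\partial \lambda_{t,i}}$.

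First I would isolate the $\lambda_{t,i}$-dependence in $w_{t+1,j}$. Using Equation~\ref{eq:w_t+1}, the only term depending on $\Lambda_t$ is $-\eta \cdot r_{t,j}$, since $w_{t,j}$ and $g_{t,j}$ depend only on $W_t$ and $Z_t$. By Equation~\ref{eq:r}, $r_{t,j} = \exp(\lambda_{t,j}) \cdot \frac{\partial \|w_{t,j}\|}{\partial w_{t,j}}$, which depends on $\lambda_{t,i}$ only when $j=i$. Hence all cross terms $j \neq i$ vanish, and for $j=i$ one gets $\frac{\partial w_{t+1,i}}{\partial \lambda_{t,i}} = -\eta \cdot \exp(\lambda_{t,i}) \cdot \frac{\partial \|w_{t,i}\|}{\partial w_{t,i}} = -\eta \cdot r_{t,i}$.

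Substituting back collapses the sum to the single surviving term $g_{t+1,i} \cdot (-\eta \cdot r_{t,i}) = -\eta \cdot g_{t+1,i} \cdot r_{t,i}$, which is the claimed identity. The only mildly subtle bookkeeping point, and the step I would be most careful about, is justifying that the weights $w_{t,j}$ themselves should be treated as constants with respect to $\lambda_{t,i}$: they were produced by earlier SGD steps that did depend on past regularization coefficients, but at time $t$ we are differentiating with respect to the current coefficient $\lambda_{t,i}$ in its role as a parameter of the current SGD update, exactly mirroring how gradients are taken in standard SGD for the weights. Once this convention is stated explicitly, the rest of the proof is mechanical.
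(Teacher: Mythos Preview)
Your proof is correct and follows essentially the same chain-rule argument as the paper: identify that $\mathcal{L}_{CF}$ depends on $\lambda_{t,i}$ only through $w_{t+1,i}$, recognize the outer factor as $g_{t+1,i}$, and differentiate $w_{t+1,i}$ via Equations~\ref{eq:w_t+1} and~\ref{eq:r} to get $-\eta\cdot r_{t,i}$. Your version is slightly more explicit in writing the full sum over $j$ before collapsing it and in flagging the convention that $W_t$ is held fixed, but the underlying computation is identical.
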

\begin{proof}
$\mathcal{L}_{CF}$ only depends on $\lambda_{t,i}$ through $w_{t+1,i}$,
allowing us to use the chain rule $\frac{\partial\mathcal{L}_{CF}}{\partial\lambda_{t,i}}=\frac{\partial\mathcal{L}_{CF}}{\partial w_{t+1,i}}\cdot\frac{\partial w_{t+1,i}}{\partial\lambda_{t,i}}$.
The first multiplier is the gradient $g_{t+1,i}$. Regarding the second
multiplier, from Equation \ref{eq:w_t+1} we see that only $r_{t,i}$
depends on $\lambda_{t,i}$. Combining with Equation \ref{eq:r} leaves
us with: 

\begin{gather*}
\frac{\partial w_{t+1,i}}{\partial\lambda_{t,i}}=\frac{\partial}{\partial\lambda_{t,i}}\left(w_{t,i}-\eta\cdot\left(g_{t,i}+r_{t,i}\right)\right)=-\eta\cdot\frac{\partial r_{t,i}}{\partial\lambda_{t,i}}=\\
=-\eta\cdot\frac{\partial}{\partial\lambda_{t,i}}\left(\exp\left(\lambda_{t,i}\right)\cdot\frac{\partial\left\Vert w_{t,i}\right\Vert }{\partial w_{t,i}}\right)=-\eta\cdot\exp\left(\lambda_{t,i}\right)\cdot\frac{\partial\left\Vert w_{t,i}\right\Vert }{\partial w_{t,i}}=-\eta\cdot r_{t,i}
\end{gather*}
\end{proof}
\begin{wrapfigure}{I}{0.5\columnwidth}%
\vspace{-20pt}\includegraphics[width=1\linewidth]{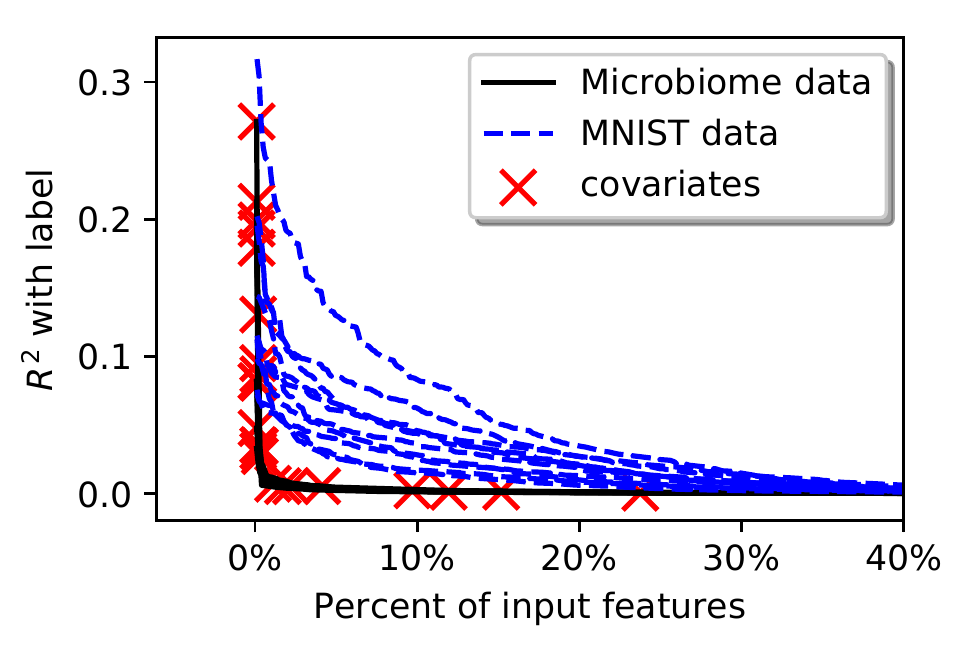}\caption{The input features, sorted by their $R^{2}$ correlation to the label.
We display the microbiome dataset, with the covariates marked, in
comparison the MNIST dataset\cite{lecun1998mnist}.}
\label{wrap:input_feature_importance}\vspace{-20pt}\end{wrapfigure}%

Theorem \ref{thm:The-gradient-of} gives us the update rule $\lambda_{t+1,i}=\lambda_{t,i}-\nu\cdot\frac{\partial\mathcal{L}_{CF}}{\partial\lambda_{t,i}}=\lambda_{t,i}+\nu\cdot\eta\cdot g_{t+1,i}\cdot r_{t,i}$,
where $\nu$ is the learning rate of the regularization coefficients.

Intuitively, the gradient of the Counterfactual Loss has an opposite
sign to the product of $g_{t+1,i}$ and $r_{t,i}$. Comparing this
result with Equation \ref{eq:w_t+1}, this means that when $g_{t+1,i}$
and $r_{t,i}$ agree in sign, the regularization helps reduce the
loss, and we can strengthen it by increasing $\lambda_{t,i}$. When
they disagree, this means that the regularization hurts the performance
of the network, and we should relax it for this weight.

The size of the Counterfactual gradient is proportional to the product
of the sizes of $g_{t+1,i}$ and $r_{t,i}$. When $g_{t+1,i}$ is
small, $w_{t+1,i}$ does not affect the loss $\mathcal{L}$ much,
and when $r_{t,i}$ is small, $\lambda_{t,i}$ does not affect $w_{t+1,i}$
much. In both cases, $\lambda_{t,i}$ has a small effect on $\mathcal{L}_{CF}$.
Only when both $r_{t,i}$ is large (meaning that $\lambda_{t,i}$
affects $w_{t+1}$), and $g_{t+1,i}$ is large (meaning that $w_{t+1}$
affects $\mathcal{L}$), $\lambda_{t,i}$ has a large effect on $\mathcal{L}_{CF}$,
and we get a large gradient $\frac{\partial\mathcal{L}_{CF}}{\partial\lambda_{t,i}}$. 

At the limit of many training iterations, $\lambda_{t,i}$ tends to
continuously decrease. We try to give some insight to this dynamics
in the supplementary material. To address this issue, we project the
regularization coefficients onto a simplex after updating them:
\begin{align}
\widetilde{\lambda}_{t+1,i} & =\lambda_{t,i}+\nu\cdot\eta\cdot g_{t+1,i}\cdot r_{t,i}\\
\lambda_{t+1,i} & =\widetilde{\lambda}_{t+1,i}+\left(\theta-\frac{\sum_{j=1}^{n}\widetilde{\lambda}_{t+1,j}}{n}\right)
\end{align}
where $\theta$ is the normalization factor of the regularization
coefficients, a hyperparameter of the network tuned using cross-validation.
This results in a zero-sum game behavior in the regularization, where
a relaxation in one edge allows us to strengthen the regularization
in other parts of the network. This could lead the network to assign
a modular regularization profile, where uninformative connections
are heavily regularized and informative connection get a very relaxed
regularization, which might boost performance on datasets with non-distributed
representation such as tabular datasets. The full algorithm is described
in the supplementary material.

\begin{figure}[H]
\centering{}\includegraphics[width=1\linewidth]{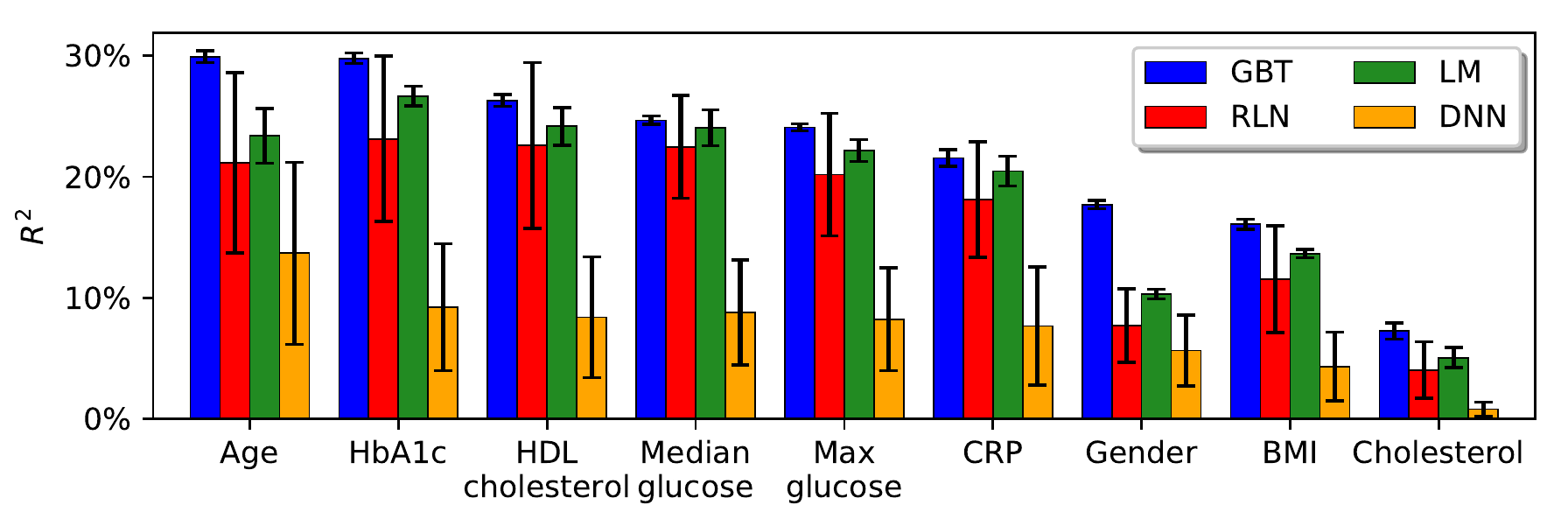}\caption{Prediction of traits using microbiome data and covariates, given as
the overall explained variance ($R^{2}$).}
\label{wrap:first_results}\vspace{-20pt}
\end{figure}

\section{Experiments\label{sec:Experiments}}

We demonstrate the performance of our method on the problem of predicting
human traits from gut microbiome data and basic covariates (age, gender,
BMI). The human gut microbiome is the collection of microorganisms
found in the human gut and is composed of trillions of cells including
bacteria, eukaryotes, and viruses. In recent years, there have been
major advances in our understanding of the microbiome and its connection
to human health. Microbiome composition is determined by DNA sequencing
human stool samples that results in short (75-100 basepairs) DNA reads.
By mapping these short reads to databases of known bacterial species,
we can deduce both the source species and gene from which each short
read originated. Thus, upon mapping a collection of different samples,
we obtain a matrix of estimated relative species abundances for each
person and a matrix of the estimated relative gene abundances for
each person. Since these features have varying relative importance
(Figure \ref{wrap:input_feature_importance}), we expected GBTs to
outperform DNNs on these tasks.

We sampled 2,574 healthy participants for which we measured, in addition
to the gut microbiome, a collection of different traits, including
important disease risk factors such as cholesterol levels and BMI.
Finding associations between these disease risk factors and the microbiome
composition is of

\begin{wrapfigure}{I}{0.45\columnwidth}%
\begin{centering}
\vspace{-20pt}\includegraphics[width=1\linewidth,height=6.5cm]{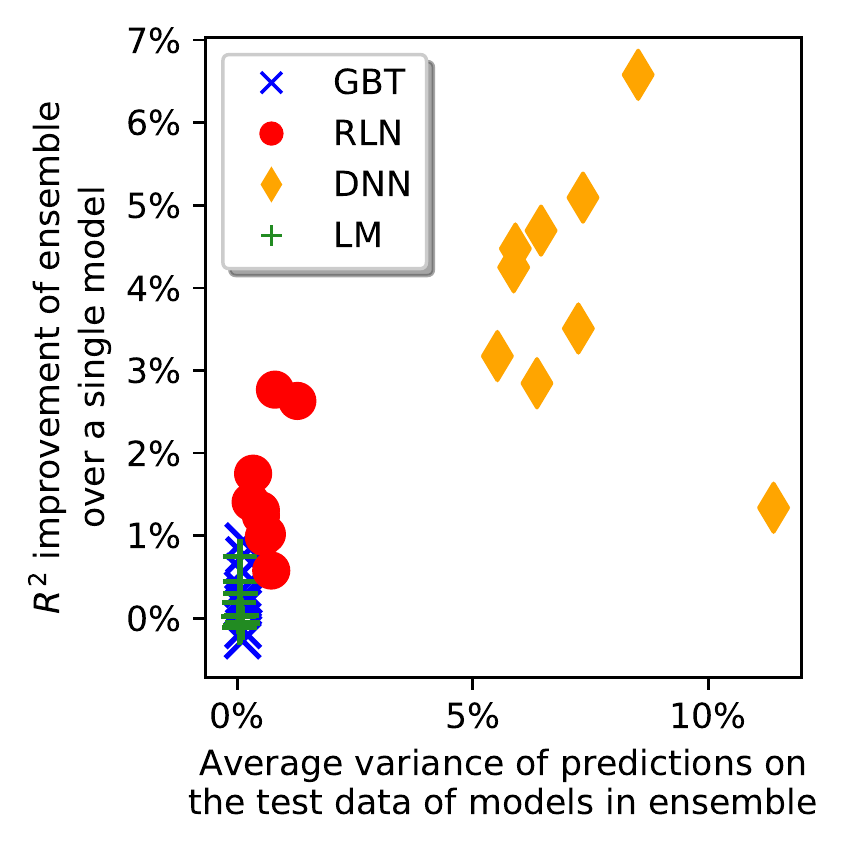}
\par\end{centering}
\caption{For each model type and trait, we took the 10 best performing models,
based on their validation performance, and calculated the average
variance of the predicted test samples, and plotted it against the
improvement in $R^{2}$ obtained when training ensembles of these
models. Note that models that have a high variance in their prediction
benefit more from the use of ensembles. As expected, DNNs gain the
most from ensembling.}
\label{wrap:ensemble_improvement}\vspace{-50pt}\end{wrapfigure}%

great scientific interest, and can raise novel hypotheses about the
role of the microbiome in disease. We tested 4 types of models: RLN,
GBT, DNN, and \textit{Linear Models} (LM). The full list of hyperparameters,
the setting of the training of the models and the ensembles, as well
as the description of all the input features and the measured traits,
can be found in the supplementary material.

\begin{figure}[b]
\begin{centering}
\vspace{-20pt}\includegraphics[width=1\columnwidth]{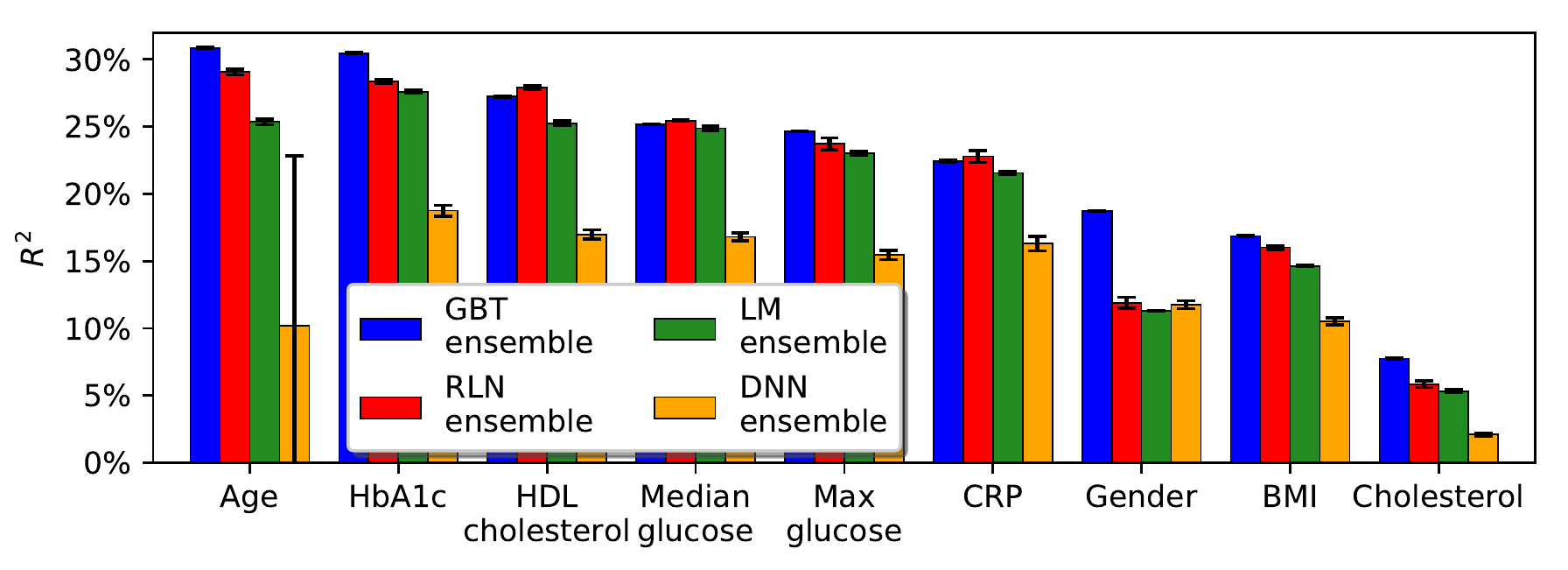}
\par\end{centering}
\caption{Ensembles of different predictors. }
\label{ensembles from a single model}
\end{figure}

\section{Results\label{sec:Results}}

\begin{figure}[t]
\centering{}\includegraphics[width=1\linewidth]{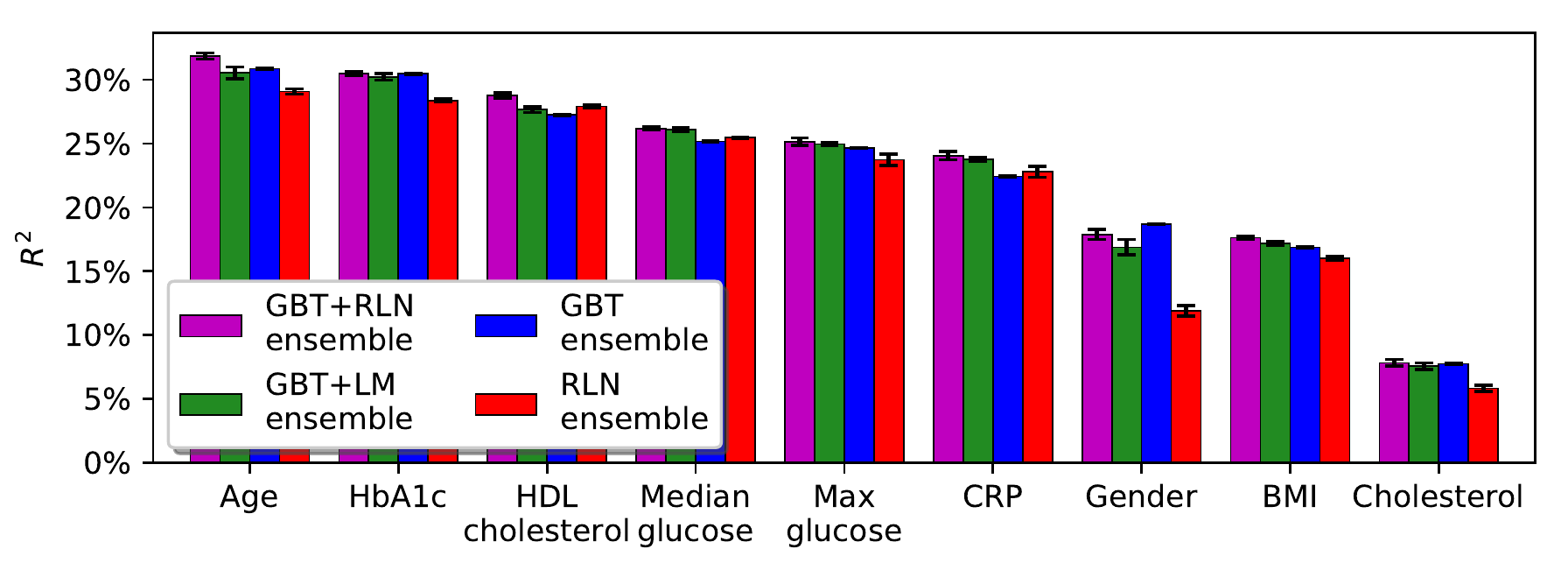}\caption{Results of various ensembles that are each composed of different types
of models.}
\label{final result figure}
\end{figure}
When running each model separately, GBTs achieve the best results
on all of the tested traits, but it is only significant on $3$ of
them (Figure \ref{wrap:first_results}). DNNs achieve the worst results,
with $15\%\pm1\%$ less explained variance than GBTs on average. RLNs
significantly and substantially improve this by a factor of \textbf{$\boldsymbol{2.57\pm0.05}$},
and achieve only $2\%\pm2\%$ less explained variance than GBTs on
average.

\begin{table}[t]
\begin{tabular}{>{\raggedright}m{1.3cm}>{\centering}p{2.3cm}>{\centering}p{2.2cm}>{\centering}p{2.4cm}>{\centering}p{2cm}>{\centering}p{0.9cm}}
\multicolumn{1}{>{\raggedright}p{1.3cm}}{Trait} & RLN + GBT  & LM + GBT  & GBT & RLN & Max\tabularnewline
\midrule
Age & $\boldsymbol{\underline{31.9\%\pm0.2\%}}$ & $30.5\%\pm0.5\%$ & $30.9\%\pm0.1\%$ & $29.1\%\pm0.2\%$ &  $31.9\%$\tabularnewline
\midrule
HbA1c & $\boldsymbol{30.5\%\pm0.2\%}$ & $30.2\%\pm0.3\%$ & $30.5\%\pm0.04\%$ & $28.4\%\pm0.1\%$ & $30.5\%$\tabularnewline
\midrule
\multicolumn{1}{>{\raggedright}p{1.6cm}}{HDL cholesterol} & $\boldsymbol{\underline{28.8\%\pm0.2\%}}$ & $27.7\%\pm0.2\%$ & $27.2\%\pm0.04\%$ & $27.9\%\pm0.1\%$ & $28.8\%$\tabularnewline
\midrule
\multicolumn{1}{>{\raggedright}p{1.3cm}}{Median glucose} & $\boldsymbol{26.2\%\pm0.1\%}$ & $26.1\%\pm0.1\%$ & $25.2\%\pm0.04\%$ & $25.5\%\pm0.1\%$ & $26.2\%$\tabularnewline
\midrule
\multicolumn{1}{>{\raggedright}p{1.3cm}}{Max glucose} & $\boldsymbol{25.2\%\pm0.3\%}$ & $25.0\%\pm0.1\%$ & $24.6\%\pm0.03\%$ & $23.7\%\pm0.4\%$ & $25.2\%$\tabularnewline
\midrule
\multirow{1}{1.3cm}{CRP} & $\boldsymbol{24.0\%\pm0.3\%}$ & $23.7\%\pm0.2\%$ & $22.4\%\pm0.1\%$ & $22.8\%\pm0.4\%$ & $24.0\%$\tabularnewline
\midrule
\multirow{1}{1.3cm}{Gender} & $17.9\%\pm0.4\%$ & $16.9\%\pm0.6\%$ & $\boldsymbol{18.7\%\pm0.03\%}$ & $11.9\%\pm0.4\%$ & $18.7\%$\tabularnewline
\midrule
\multirow{1}{1.3cm}{BMI} & $\boldsymbol{\underline{17.6\%\pm0.1\%}}$ & $17.2\%\pm0.2\%$ & $16.9\%\pm0.04\%$ & $16.0\%\pm0.1\%$ & $17.6\%$\tabularnewline
\midrule
\multirow{1}{1.3cm}{Cholesterol} & $\boldsymbol{7.8\%\pm0.3\%}$ & \textbf{$7.6\%\pm0.3\%$} & $7.8\%\pm0.1\%$ & $5.8\%\pm0.2\%$ & $7.8\%$\tabularnewline
\bottomrule
\end{tabular}

\caption{Explained variance ($R^{2}$) of various ensembles with different
types of models. Only the 4 ensembles that achieved the best results
are shown. The best result for each trait is highlighted, and underlined
if it outperforms significantly all other ensembles.}
\label{ensemble_results}\vspace{-20pt}
\end{table}
Constructing an ensemble of models is a powerful technique for improving
performance, especially for models which have high variance, like
neural networks in our task. As seen in Figure \ref{wrap:ensemble_improvement},
the average variance of predictions of the top 10 models of RLN and
DNN is $1.3\%\pm0.6\%$ and $14\%\pm3\%$ respectively, while the
variance of predictions of the top 10 models of LM and GBT is only
$0.13\%\pm0.05\%$ and $0.26\%\pm0.02\%$, respectively. As expected,
the high variance of RLN and DNN models allows ensembles of these
models to improve the performance over a single model by $1.5\%\pm0.7\%$
and $4\%\pm1\%$ respectively, while LM and GBT only improve by $0.2\%\pm0.3\%$
and $0.3\%\pm0.4\%$, respectively. Despite the improvement, DNN ensembles
still achieve the worst results on all of the traits except for \textit{Gender}
and achieve results $9\%\pm1\%$ lower than GBT ensembles (Figure
\ref{ensembles from a single model}). In comparison, this improvement
allows RLN ensembles to outperform GBT ensembles on \textit{HDL cholesterol,
Median glucose, }and\textit{ CRP}, and to obtain results $8\%\pm1\%$
higher than DNN ensembles and only $1.4\%\pm0.1\%$ lower than GBT
ensembles.

Using ensemble of different types of models could be even more effective
because their errors are likely to be even more uncorrelated than
ensembles from one type of model. Indeed, as shown in Figure \ref{final result figure},
the best performance is obtained with an ensemble of RLN and GBT,
which achieves the best results on all traits except \textit{Gender},
and outperforms all other ensembles significantly on \textit{Age},
\textit{BMI}, and \textit{HDL cholesterol} (Table \ref{ensemble_results})

\section{Analysis\label{sec:Analysis}}

\begin{wrapfigure}{I}{0.8\columnwidth}%
\begin{centering}
\vspace{-25pt}\subfloat[]{\includegraphics[width=0.46\linewidth]{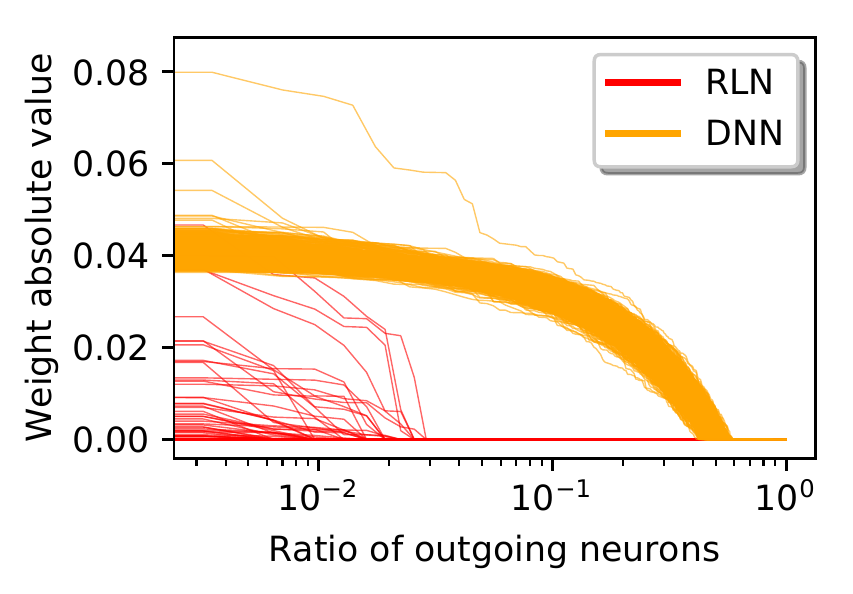}\label{weight distribution}}\subfloat[]{\includegraphics[width=0.46\linewidth]{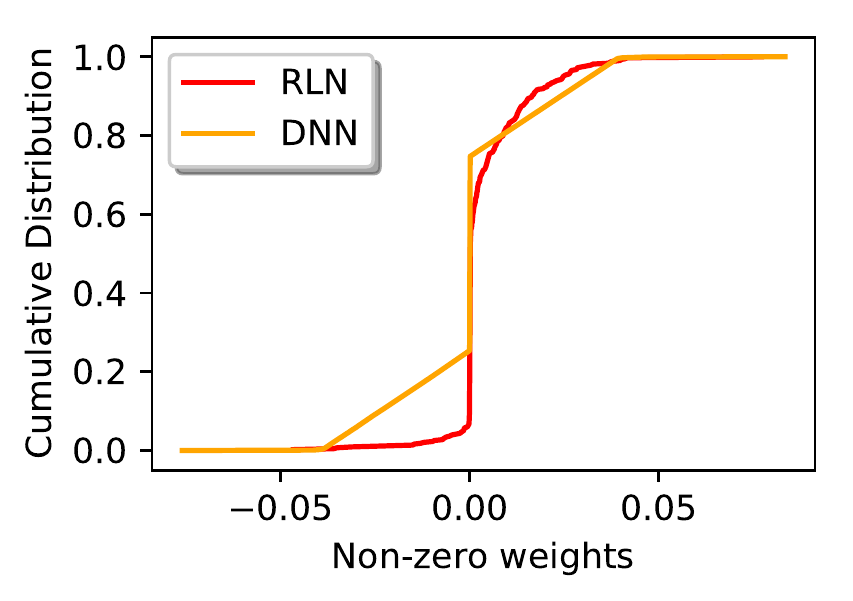}\label{non-zero weights cummulative distribution}}
\par\end{centering}
\caption{a) Each line represents an input feature in a model. The values of
each line are the absolute values of its outgoing weights, sorted
from greatest to smallest. Noticeably, only $12\%$ of the input features
have any non-zero outgoing edge in the RLN model. b) The cumulative
distribution of non-zero outgoing weights for the input features for
different models. Remarkably, the distribution of non-zero weights
is quite similar for the two models.}

\vspace{-7pt}\end{wrapfigure}%

We next sought to examine the effect that our new type of regularization
has on the learned networks. Strikingly, we found that RLNs are extremely
sparse, even compared to $L_{1}$ regulated networks. To demonstrate
this, we took the hyperparameter setting that achieved the best results
on the \textit{HbA1c} task for the DNN and RLN models and trained
a single network on the entire dataset. Both models achieved their
best hyperparameter setting when using $L_{1}$ regularization. Remarkably,
$82\%$ of the input features in the RLN do not have any non-zero
outgoing edges, while all of the input features have at least one
non-zero outgoing edge in the DNN (Figure \ref{weight distribution}).
A possible explanation could be that the RLN was simply trained using
a stronger regularization coefficients, and increasing the value of
$\lambda$ for the DNN model would result in a similar behavior for
the DNN, but in fact the RLN was obtained with an average regularization
coefficient of $\theta=-6.6$ while the DNN model was trained using
a regularization coefficient of $\lambda=-4.4$. Despite this extreme
sparsity, the non zero weights are not particularly small and have
a similar distribution as the weights of the DNN (Figure \ref{non-zero weights cummulative distribution}).

We suspect that the combination of a sparse network with large weights
allows RLNs to achieve their improved performance, as our dataset
includes features with varying relative importance. To show this,
we re-optimized the hyperparameters of the DNN and RLN models after
removing the covariates from the datasets. The covariates are very
important features (Figure \ref{wrap:input_feature_importance}),
and removing them would reduce the variability in relative importance.
As can be seen in Figure \ref{only microbiome-results}, even without
the covariates, the RLN and GBT ensembles still achieve the best results
on $5$ out of the $9$ traits. However, this improvement is less
significant than when adding the covariates, where RLN and GBT ensembles
achieve the best results on $8$ out of the $9$ traits. RLNs still
significantly outperform DNNs, achieving explained variance higher
by $2\%\pm1\%$, but this is significantly smaller than the $9\%\pm2\%$
improvement obtained when adding the covariates (Figure \ref{only microbiome-rln-improvement}).
We speculate that this is because RLNs particularly shine when features
have very different relative importances.

\begin{figure}[t]
\begin{centering}
\subfloat[]{\includegraphics[width=1\linewidth]{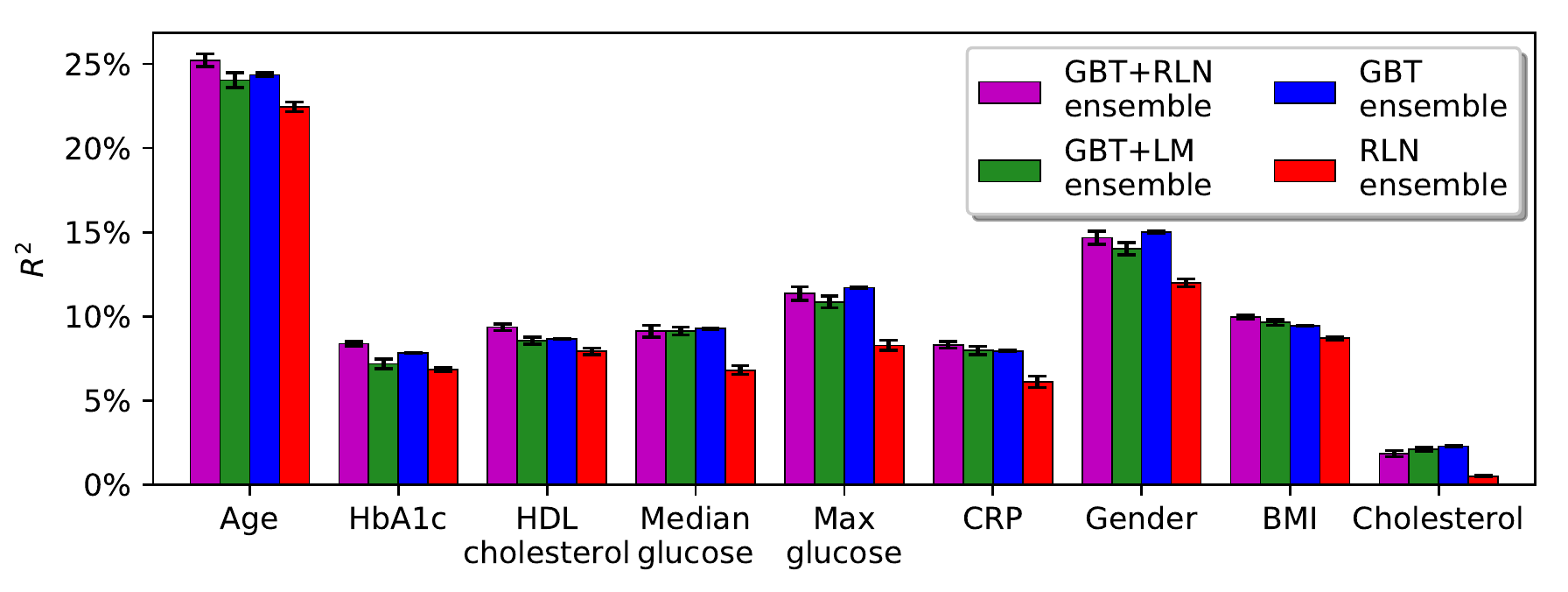}\label{only microbiome-results}}
\par\end{centering}
\centering{}\subfloat[]{\includegraphics[width=1\linewidth]{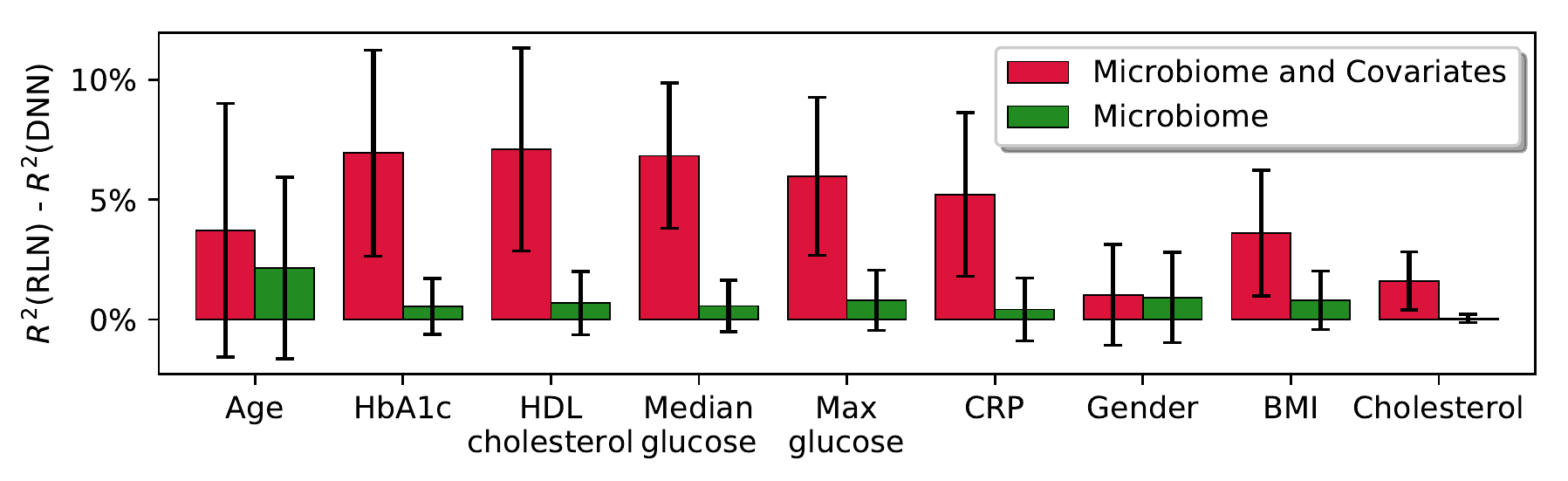}\label{only microbiome-rln-improvement}}\caption{a) Training our models without adding the covariates. b) The relative
improvement RLN achieves compared to DNN for different input features.}
\label{only microbiome}\vspace{-15pt}
\end{figure}
\begin{figure}[t]
\includegraphics[width=1\linewidth]{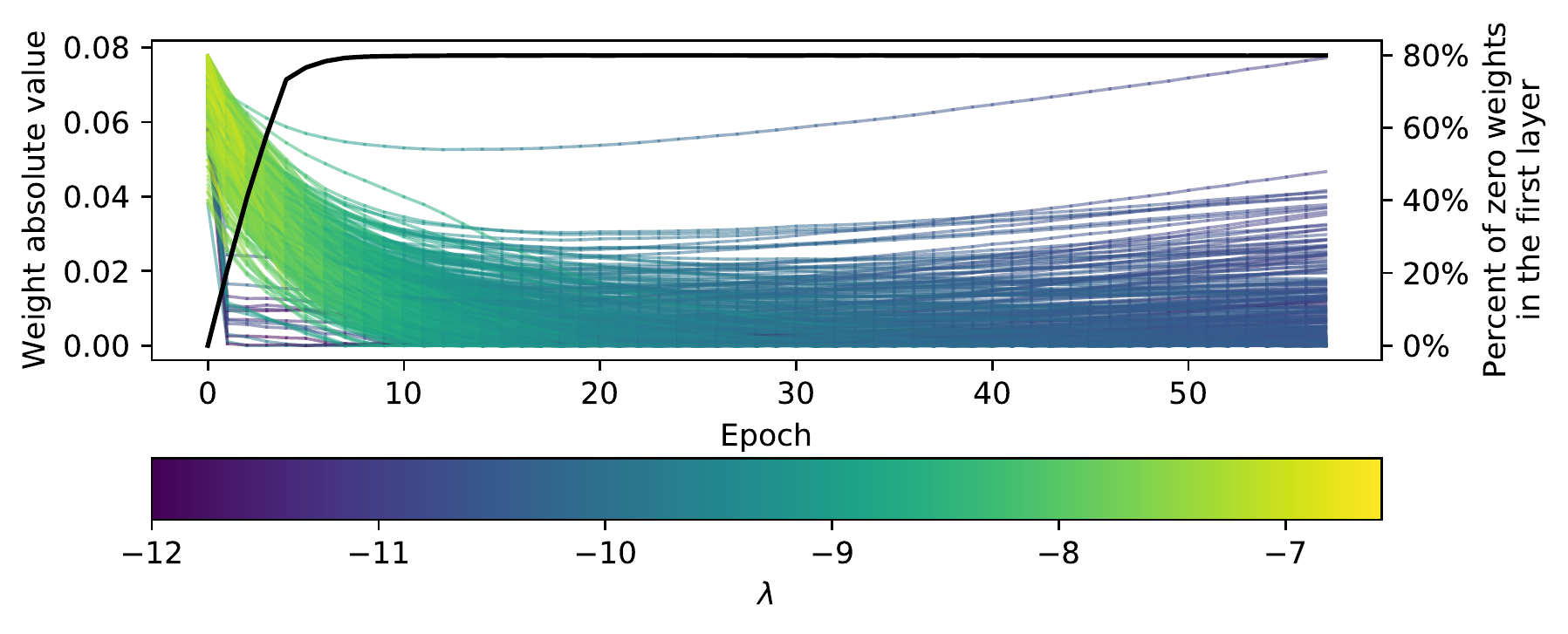}\caption{On the left axis, shown is the traversal of edges of the first layer
that finished the training with a non-zero weight in the $w$, $\lambda$
space. Each colored line represents an edge, its color represents
its regularization, with yellow lines having strong regularization.
On the right axis, the black line plots the percent of zero weight
edges in the first layer during training.}

\centering{}\label{edge traversal}\vspace{-20pt}
\end{figure}
To understand what causes this interesting structure, we next explored
how the weights in RLNs change during training. During training, each
edge performs a traversal in the $w$, $\lambda$ space. We expect
that when $\lambda$ decreases and the regularization is relaxed,
the absolute value of $w$ should increase, and vice versa. In Figure
\ref{edge traversal}, we can see that $\boldsymbol{\underline{99.9\%}}$
of the edges of the first layer finish the training with a zero value.
There are still $434$ non-zero edges in the first layer due to the
large size of the network. This is not unique to the first layer,
and in fact, $\boldsymbol{\underline{99.8\%}}$ of the weights of
the entire network have a zero value by the end of the training. The
edges of the first layer that end up with a non-zero weight are decreasing
rapidly at the beginning of the training because of the regularization,
but during the first 10-20 epochs, the network quickly learns better
regularization coefficients for its edges. The regularization coefficients
are normalized after every update, hence by applying stronger regularization
on some edges, the network is allowed to have a more relaxed regularization
on other edges and consequently a larger weight. By epoch 20, the
edges of the first layer that end up with a non-zero weight have an
average regularization coefficient of $-9.4$, which is significantly
smaller than their initial value $\theta=-6.6$. These low values
pose effectively no regularization, and their weights are updated
primarily to minimize the empirical loss component of the loss function,
$\mathcal{L}$.

Finally, we reasoned that since RLNs assign non-zero weights to a
relatively small number of inputs, they may be used to provide insights
into the inputs that the model found to be more important for generating
its predictions using Garson's algorithm \cite{Garson:1991:INC:129449.129452}.
There has been important progress in recent years in sample-aware
model interpretability techniques in DNNs \cite{Shrikumar,Sundararajan},
but tools to produce sample-agnostic model interpretations are lacking
\cite{Hooker}.\footnote{The sparsity of RLNs could be beneficial for sample-aware model interpretability
techniques such as \cite{Shrikumar,Sundararajan}. This was not examined
in this paper.} Model interpretability is particularly important in our problem for
obtaining insights into which bacterial species contribute to predicting
each trait.

Evaluating feature importance is difficult, especially in domains
in which little is known such as the gut microbiome. One possibility
is to examine the information it supplies. In Figure \ref{Feature importance entropy}
we show the feature importance achieved through this technique using
RLNs and DNNs. While the importance in DNNs is almost constant and
does not give any meaningful information about the specific importance
of the features, the importance in RLNs is much more meaningful, with
entropy of the $4.6$ bits for the RLN importance, compared to more
than twice for the DNN importance, $9.5$ bits.

Another possibility is to evaluate its consistency across different
instantiations of the model. We expect that a good feature importance
technique will give similar importance distributions regardless of
instantiation. We trained 10 instantiations for each model and phenotype
and evaluated their feature importance distributions, for which we
calculated the Jensen-Shannon divergence. In Figure \ref{feature importance divergence}
we see that RLNs have divergence values $48\%\pm1\%$ and $54\%\pm2\%$
lower than DNNs and LMs respectively. This is an indication that Garson\textquoteright s
algorithm results in meaningful feature importances in RLNs. We list
of the 5 most important bacterial species for different traits in
the supplementary material.

\begin{figure}[t]
\subfloat[]{\includegraphics[width=0.3\linewidth]{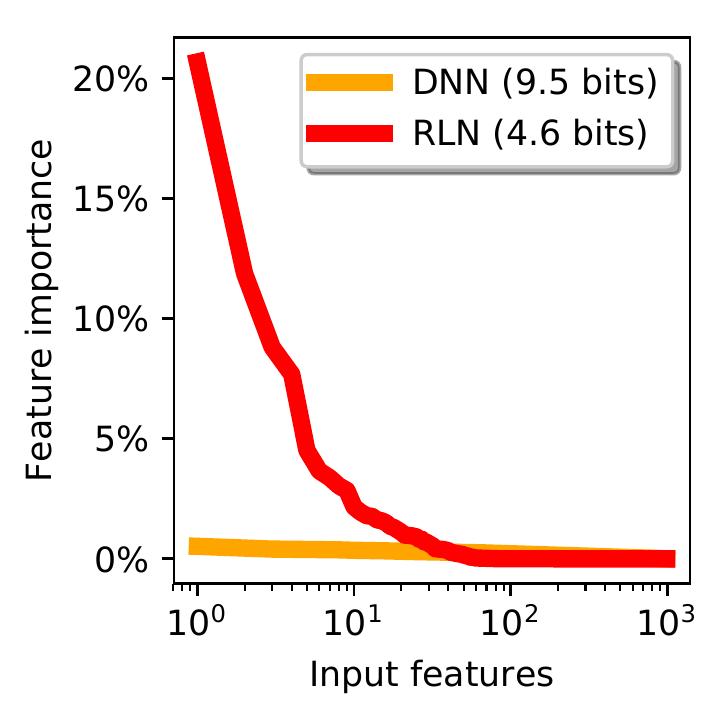}\label{Feature importance entropy}}\subfloat[]{\includegraphics[width=0.7\linewidth]{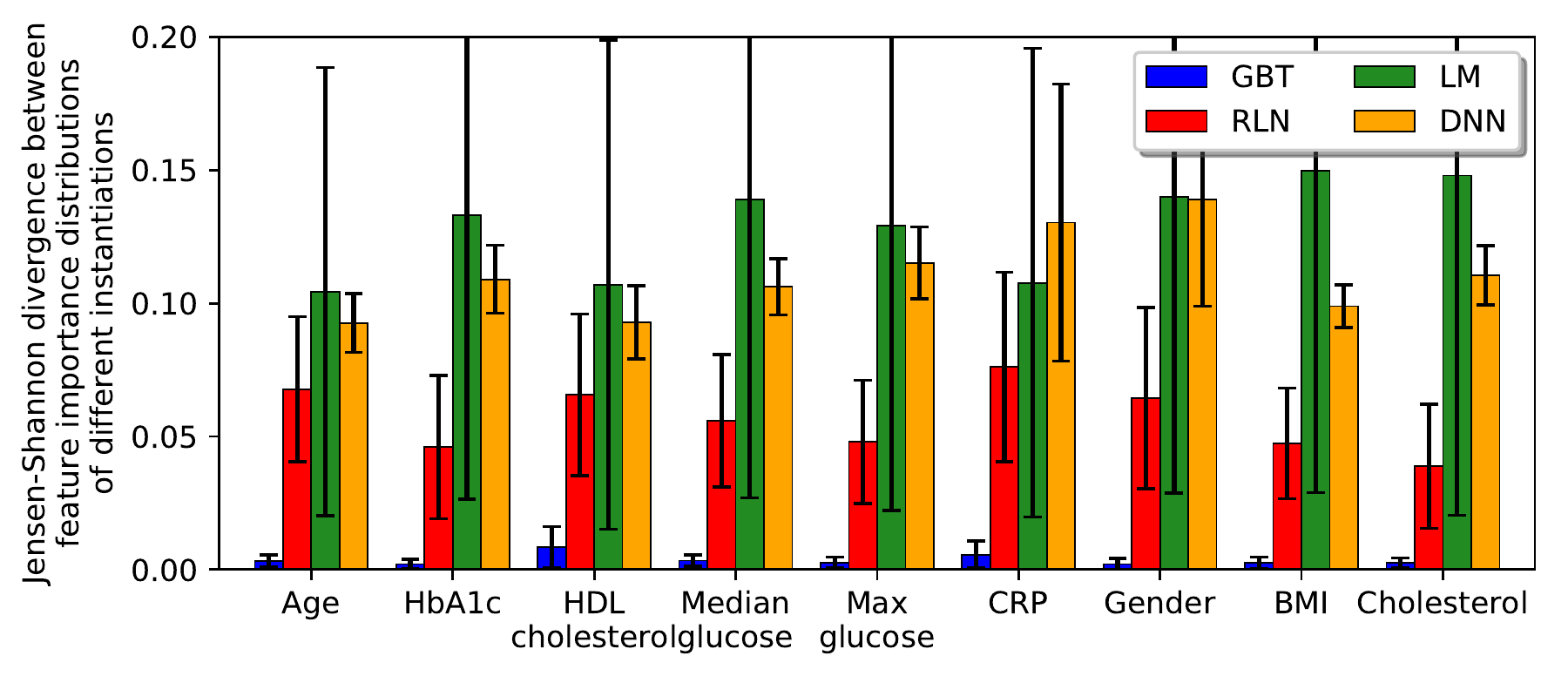}\label{feature importance divergence}}

\caption{a) The input features, sorted by their importance, in a DNN and RLN
models. b) The Jensen-Shannon divergence between the feature importance
of different instantiations of a model.}

\vspace{-15pt}
\end{figure}

\section{Conclusion}

In this paper, we explore the learning of datasets with non-distributed
representation, such as tabular datasets. We hypothesize that modular
regularization could boost the performance of DNNs on such tabular
datasets. We introduce the \textit{Counterfactual Loss}, \emph{$\mathcal{L}_{CF}$},
and \textit{Regularization Learning Networks} (RLNs) which use the
Counterfactual Loss to tune its regularization hyperparameters efficiently
during learning together with the learning of the weights of the network.

We test our method on the task of predicting human traits from covariates
and microbiome data and show that RLNs significantly and substantially
improve the performance over classical DNNs, achieving an increased
explained variance by a factor of $2.75\pm0.05$ and comparable results
with GBTs. The use of ensembles further improves the performance of
RLNs, and ensembles of RLN and GBT achieve the best results on all
but one of the traits, and outperform significantly any other ensemble
not incorporating RLNs on $3$ of the traits.

We further explore RLN structure and dynamics and show that RLNs learn
extremely sparse networks, eliminating $99.8\%$ of the network edges
and $82\%$ of the input features. In our setting, this was achieved
in the first 10-20 epochs of training, in which the network learns
its regularization. Because of the modularity of the regularization,
the remaining edges are virtually not regulated at all, achieving
a similar distribution to a DNN. The modular structure of the network
is especially beneficial for datasets with high variability in the
relative importance of the input features, where RLNs particularly
shine compared to DNNs. The sparse structure of RLNs lends itself
naturally to model interpretability, which gives meaningful insights
into the relation between features and the labels, and may itself
serve as a feature selection technique that can have many uses on
its own \cite{Goodman}.

Besides improving performance on tabular datasets, another important
application of RLNs could be learning tasks where there are multiple
data sources, one that includes features with high variability in
the relative importance, and one which does not. To illustrate this
point, consider the problem of detecting pathologies from medical
imaging. DNNs achieve impressive results on this task \cite{Suzuki},
but in real life, the imaging is usually accompanied by a great deal
of tabular metadata in the form of the electronic health records of
the patient. We would like to use both datasets for prediction, but
different models achieve the best results on each part of the data.
Currently, there is no simple way to jointly train and combine the
models. Having a DNN architecture such as RLN that performs well on
tabular data will thus allow us to jointly train a network on both
of the datasets natively, and may improve the overall performance.

\subsubsection*{Acknowledgments}

We would like to thank Ron Sender, Eran Kotler, Smadar Shilo, Nitzan
Artzi, Daniel Greenfeld, Gal Yona, Tomer Levy, Dror Kaufmann, Aviv
Netanyahu, Hagai Rossman, Yochai Edlitz, Amir Globerson and Uri Shalit
for useful discussions.

{\small{}\bibliographystyle{plain}
\bibliography{Regularization_Learning,Additional}
}{\small\par}
\end{document}